\DeclareMathOperator{\Loss}{\mathcal{L}}
\definecolor{maroon}{rgb}{255,218,185}
\definecolor{lavender}{rgb}{0.9, 0.9, 0.98}
\theoremstyle{plain}
\newtheorem{theorem}{Theorem}[section]
\newtheorem{lemma}[theorem]{Lemma}
\theoremstyle{definition}
\theoremstyle{remark}
\begin{document}

% If your paper is accepted and the title of your paper is very long,
% the style will print as headings an error message. Use the following
% command to supply a shorter title of your paper so that it can be
% used as headings.
%
%\runningtitle{I use this title instead because the last one was very long}

% If your paper is accepted and the number of authors is large, the
% style will print as headings an error message. Use the following
% command to supply a shorter version of the authors names so that
% they can be used as headings (for example, use only the surnames)
%
%\runningauthor{Surname 1, Surname 2, Surname 3, ...., Surname n}

\twocolumn[

\aistatstitle{Graph Attention-based Deep Reinforcement Learning for solving the Chinese Postman Problem with Load-dependent costs}

\aistatsauthor{Truong Son Hy \And Cong Dao Tran}

\aistatsaddress{Indiana State University, Terre Haute, IN 47809, USA \And FPT Software AI Center, Hanoi, Vietnam} ]

\begin{abstract}
% Arc routing problems play an important role in operations research, engineering, logistics, and mathematics. In this paper, we propose an efficient graph-based Reinforcement Learning (RL) model to solve the Chinese Postman Problem (CPP) with load-dependent costs introduced by \cite{332b87a03ada41f3b41ad5e1c0541165}. In addition, we also propose a metaheuristic solution based on Evolutionary Algorithm (EA). In our experiments, we have shown that two proposals of ours, RL and EA, consistently outperform all previous well-known metaheuristics including Iterated Local Search (ILS) and Variable Neighborhood Search (VNS) proposed by \cite{332b87a03ada41f3b41ad5e1c0541165}. Notably, our RL method runs extremely fast while its results are close to the best ones produced by our EA. Our PyTorch implementation for the RL algorithm as well as the training data is publicly at {\color{blue} \textbf{[anonymous url]}}. We release our C++ implementations for metaheuristics such as EA, ILS and VNS at {\color{blue} \textbf{[anonymous url]}}.

% Routing problems play a vital role in operations research, engineering, logistics, and transportation. These problems can be broadly categorized into two main classes: node routing (NRP) and arc routing (ARP).
Recently, Deep reinforcement learning (DRL) models have shown promising results in solving routing problems. However, most DRL solvers are commonly proposed to solve node routing problems, such as the Traveling Salesman Problem (TSP). Meanwhile, there has been limited research on applying neural methods to arc routing problems, such as the Chinese Postman Problem (CPP), since they often feature irregular and complex solution spaces compared to TSP. To fill these gaps, this paper proposes a novel DRL framework to address the CPP with load-dependent costs (CPP-LC) \citep{332b87a03ada41f3b41ad5e1c0541165}, which is a complex arc routing problem with load constraints. The novelty of our method is two-fold. First, we formulate the CPP-LC as a Markov Decision Process (MDP) sequential model. Subsequently, we introduce an autoregressive model based on DRL, namely Arc-DRL, consisting of an encoder and decoder to address the CPP-LC challenge effectively. Such a framework allows the DRL model to work efficiently and scalably to arc routing problems. Furthermore, we propose a new bio-inspired meta-heuristic solution based on Evolutionary Algorithm (EA) for CPP-LC. Extensive experiments show that Arc-DRL outperforms existing meta-heuristic methods such as Iterative Local Search (ILS) and Variable Neighborhood Search (VNS) proposed by \citep{332b87a03ada41f3b41ad5e1c0541165} on large benchmark datasets for CPP-LC regarding both solution quality and running time; while the EA gives the best solution quality with much more running time. We release our C++ implementations for metaheuristics such as EA, ILS and VNS along with the code for data generation and our generated data at \url{https://github.com/HySonLab/Chinese_Postman_Problem}.
% {\color{red} emphasize the novelty and contributions and focus on RL}
% In addition, we also propose a metaheuristic solution based on Evolutionary Algorithm (EA). In our experiments, we have shown that two proposals of ours, RL and EA, consistently outperform all previous well-known metaheuristics including Iterated Local Search (ILS) and Variable Neighborhood Search (VNS) proposed by \cite{332b87a03ada41f3b41ad5e1c0541165}. Notably, our RL method runs extremely fast while its results are close to the best ones produced by our EA. Our PyTorch implementation for the RL algorithm as well as the training data is publicly at {\color{blue} \textbf{[anonymous url]}}. We release our C++ implementations for metaheuristics such as EA, ILS and VNS at {\color{blue} \textbf{[anonymous url]}}.

\end{abstract}
\saythanks{Correspondence to \url{TruongSon.Hy@indstate.edu}.}
\section{Introduction} \label{sec:intro}
Routing problems are among the core NP-hard combinatorial optimization problems with a crucial role in computer science and operations research (OR). These problems have numerous real-world applications in industries such as transportation, supply chain, and operations management. As a result, they have attracted intensive research efforts from the OR community, where various exact and heuristic approaches have been proposed.
Exact methods often face limitations in terms of high time complexity, making them challenging to apply to practical situations when dealing with large-sized instances. On the other hand, traditionally, heuristic methods have been commonly used in real-world applications due to their lower computational time and complexity. However, designing a heuristic algorithm is extremely difficult because it requires specific domain knowledge in both the problem and implementation, posing significant challenges for algorithm designers when dealing with diverse combinatorial optimization problems in real-world applications.
Moreover, heuristic algorithms solve problems through trial-and-error iterative procedures, which can be time-consuming, especially for large-scale problem instances. Hence, constructing efficient data-driven methods to solve combinatorial optimization problems is essential and necessary.

Recently, machine learning methods (especially deep learning and reinforcement learning) have achieved outstanding performances in various applications in computer vision, natural language processing, speech recognition, etc. These data-driven methods also provide an alternative approach to solving combinatorial optimization problems via automatically learning complex patterns, heuristics, or policies from generated instances, thus consequentially reducing the need for well-designed human rules.
Furthermore, machine learning-based methods can accelerate the solving process and reduce computation time with certain parallel hardware platforms like GPUs, which traditional algorithms cannot take advantage of. In fact, some research has applied deep reinforcement learning (DRL) to solve routing problems, including Traveling Salesman Problems (TSP) \citep{joshi2019efficient} \citep{ijcai2021p595} \citep{hudson2022graph} and Vehicle Routing Problems (VRPs) \citep{NEURIPS2018_9fb4651c} \citep{kool2018attention} \citep{NEURIPS2020_06a9d51e} \citep{NEURIPS2021_dc9fa5f2} \citep{pmlr-v216-jiang23a}. However, TSP and VRPs are simple node routing, and many real-world routing problems are defined on edges, i.e., arc routing, and their solution is more complex with numerous constraints.

In this paper, we focus on the arc routing problem, specifically the Chinese Postman Problem with load-dependent costs (CPP-LC) \citep{332b87a03ada41f3b41ad5e1c0541165}, a complex routing problem with various practical applications in waste collection, mail delivery, and bus routing \citep{corberan2021arc}. The CPP-LC is an NP-hard combinatorial optimization problem and represents the first arc routing problem where the cost of traversing an edge depends not only on its length but also on the weight of the vehicle (including curb weight plus load) when it is traversed. Compared to TSP and VRP, the solution space of the CPP-LC can be much more irregular and challenging. To solve this problem, exact methods require significant computational time and complexity, which is only suitable for small instances. For larger instances, some conventional heuristic approaches such as Variable Neighborhood Search (VNS) and Iterated Local Search (ILS) \citep{332b87a03ada41f3b41ad5e1c0541165} have been applied but are still relatively simple, and their solutions need substantial improvement compared to the optimal solutions. Due to these existing limitations, we propose three methods to effectively tackle CPP-LC and improve existing approaches in terms of both computational time and solution quality. Specifically, our main contributions are as follows:

\begin{compactitem}

\item Firstly, we model the CPP-LC problem as a sequential solution construction problem by defining the Markov decision process model for the arc routing problem, namely Arc-MDP. This Arc-MDP model can be easily modified and applied to other arc routing problems and solved by DRL methods.
\item Secondly, we present a novel data-driven method based on Deep Reinforcement Learning and Graph Attention to solve this problem effectively and efficiently. This model can generate solutions for an instance in much less time compared to heuristic and exact methods.
\item Last but not least, we introduce two metaheuristic methods inspired by nature evolution and swarm intelligence, specifically the Evolutionary Algorithm (EA) and Ant Colony Optimization (ACO), as new metaheuristics for CPP-LC. We then conducted extensive experiments carefully to compare our Arc-DRL with several heuristic baselines on various problem instances. The empirical results demonstrate the superior performance of our approaches compared to previous methods regarding both solution quality and evaluation time. 
\end{compactitem}
\section{Related works} \label{sec:related_work}
% In this section, we briefly review the existing ML-based methods for solving popular NP-hard routing problems, e.g., TSP and VRP, and previous conventional approaches for solving the CPP-LC problem.

\paragraph{Machine learning for routing problems.} \citep{bello2016neural} has introduced one of the first methods utilizing Deep Reinforcement Learning in Neural Combinatorial Optimization (DRL-NCO). Their approach is built on Pointer Network (PointerNet) \citep{vinyals2015pointer} and trained using the actor-critic method \citep{NIPS1999_6449f44a}. Structure2Vec Deep Q-learning (S2V-DQN) \citep{khalil2017learning} utilizes a graph embedding model called Structure2Vec (S2V) \citep{dai2016discriminative} as its neural network architecture to tackle certain graph-based combinatorial optimization problems by generating embeddings for each node in the input graph. Notably, on straightforward graph-based combinatorial optimization problems such as Minimum Vertex Cover, Maximum Cut, and 2D Euclidean Traveling Salesman Problem (TSP), S2V-DQN can produce solutions that closely approach the optimal solutions. Later, the methods based on Attention Model (AM) \citep{kool2018attention,kwon2020pomo} extended \citep{bello2016neural} by replacing PointerNet with the Transformer \citep{vaswani2017attention}, has become the prevailing standard Machine Learning method for NCO. AM demonstrates its problem agnosticism by effectively solving various classical routing problems and their practical extensions \citep{liao2020attention,kim2021learning}. Recent works such as \citep{jiang2022learning,bi2022learning} have also tackled the issue of impairing the cross-distribution generalization ability of neural methods for VRPs. 

% Particularly, Jiang et al. \citep{jiang2022learning} jointly optimized the weights for different groups of distributions and the parameters for the deep model in an interleaved manner during training and also designed a module based on a convolutional neural network, allowing the deep model to learn more informative latent patterns among the nodes. Bi et al. \citep{bi2022learning} utilized knowledge distillation and introduced Adaptive Multi-Distribution Knowledge Distillation (AMDKD) to train deep models that are more versatile and generalizable.

\paragraph{Existing algorithms for solving CPP-LC.} 
Routing problems can be classified into two main categories: node routing, where customers are represented as nodes in a network, and arc routing problems (ARPs), where service is conducted on the arcs or edges of a network \citep{corberan2021arc}. Traditionally, research on routing problems has predominantly focused on node routing problems. However, the literature on ARPs has been steadily increasing, and the number of algorithms designed for these problems has been considerably growing in recent years. Many existing works on ARPs assume constant traversal costs for the routes, but in reality, this is often not the case in many real-life applications. One significant factor that affects the routing costs is the vehicle load. Higher loads result in increased fuel consumption and consequently higher polluting emissions. Hence, the cost of traversing an arc depends on factors like the arcs previously serviced by the route. The Chinese Postman Problem with load-dependent costs (CPP-LC) addressed this issue and was introduced by \cite{332b87a03ada41f3b41ad5e1c0541165}. In \citep{332b87a03ada41f3b41ad5e1c0541165}, it was proven that CPP-LC remains strongly NP-hard even when defined on a weighted tree. The authors proposed two formulations for this problem, one based on a pure arc-routing representation and another based on transforming the problem into a node-routing problem.  However, they could only optimally solve very small instances within a reasonable time frame. In addition, two metaheuristic algorithms, namely Iterated Local Search (ILS) and Variable Neighborhood Search (VNS) \citep{332b87a03ada41f3b41ad5e1c0541165}, have been designed to produce good solutions for larger instances. However, they are still relatively simple, and the solutions can be further improved. Therefore, ARPs with load-dependent costs pose significant challenges and deserve more attention in the future.

\section{Preliminiaries} 
\label{sec:background}
\subsection{Notations and Problem Definition of CPP-LC}
\begin{figure}
    \centering
    \includegraphics[scale=1.0]{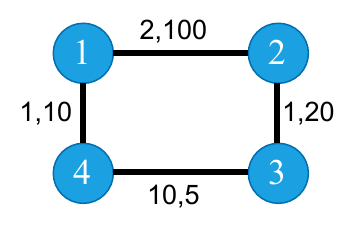}
    \caption{A CPP-LC instance.}
    \label{fig:instance}
\end{figure}
The Chinese Postman Problem with load-dependent costs (CPP-LC) can be modeled as follows. Let $G = (V, E)$ be an undirected connected graph, where $V = \{1, \ldots, n\}$ is the vertex set and
$E = \{e_1, \ldots, e_m\}$ is the edge set, where vertex $1$ represents the depot (i.e. the starting node). Each edge $e \in E$ has a
length $d_e$ and a demand of $q_e \neq 0$ units of the commodity (e.g., kilograms of salt) to be spread
on this edge. Starting at the depot, a vehicle with curb weight $W$ and loaded with $Q = \sum_{e \in E}{q_e}$ units of commodities services all domains of the graph's edges before returning to the depot. An amount $q_e$ of commodity is downloaded from the vehicle when the first time an edge $e = (i, j)$ is traversed. Any number of times in deadheading mode can be used to traverse an edge after it has been serviced. Furthermore, there is an additional cost proportional to the edge length $d_e$ multiplied by the current weight of the vehicle when each time an edge $e$ is traversed as:
\begin{math}
    d_e \times (W + \text{Load}(e)),
\end{math}
where $\text{Load}(e)$ is the load of the vehicle while traversing $e$, and it depends on the amount of commodity downloaded on the edges served before $e$ and on edge $e$ itself while serving it. Let $Q_e$ be the vehicle load at node $i$ just before traversing edge $e=(i,j)$. The load of the vehicle upon reaching node $j$ is $Q_e - q_e$, and we assume that the average load of the vehicle while serving the edge $e$ is $Q_e - \frac{q_e}{2}$.
Therefore, the load of the vehicle while traversing $e$ while serving it is $\text{Load}(e)=(Q_e - \frac{q_e}{2})$ while serving $e$. Otherwise, $\text{Load}(e)$ while traversing $e$ in deadhead is $Q_e$.

For the CPP-LC instance, a solution is any closed tour starting and beginning at the depot that traverses all the edges at least once. We need to find a CPP-LC solution with a minimum cost which is the sum of the costs associated with the traversed edges.

A simple instance of the CPP-LC problem is illustrated in Figure \ref{fig:instance}. For instance, we assume the vehicle's curb weight is $W=0$, and the length $d_e$ and the demand $q_e$ of edge $e$ correspond to two numbers next to this edge. The optimal CPP-LC for this instance is the walk $(1,2), (2,3), [3,2], [2,1], (1,4), (4,3), [3,4], [4,1]$, where $(i,j)$ means that the corresponding edge is traversed and served from $i$ to $j$ and $[i,j]$ means that the corresponding edge is deadheaded. The cost of this tour is calculated by adding the cost of all the edge traversals as follows: 

\begin{table}[htbp]
  \centering
  % \caption{Add caption}
    \begin{tabular}{rrll}
    \multicolumn{1}{c}{edge} & \multicolumn{1}{c}{vehicle load} & \multicolumn{1}{c}{cost} \\
    \multicolumn{1}{c}{$(1,2)$} & \multicolumn{1}{c}{$135$} & $2\times (0+135-\frac{100}{2})$ & $= 170$\\
    \multicolumn{1}{c}{$(2,3)$} & \multicolumn{1}{c}{$35$} & $1\times (0+35-\frac{20}{2})$ & $= 25$\\
    \multicolumn{1}{c}{$[3,2]$} & \multicolumn{1}{c}{$15$} & $1\times (0+15) $  & $= 15$\\
    \multicolumn{1}{c}{$[2,1]$} & \multicolumn{1}{c}{$15$} & $2\times (0+15) $  & $= 30$\\
    \multicolumn{1}{c}{$(1,4)$} & \multicolumn{1}{c}{$15$} &  $1 \times (0+15-\frac{10}{2})  $ & $= 10$\\
    \multicolumn{1}{c}{$(4,3)$} & \multicolumn{1}{c}{$5$} & $10 \times (0+5-\frac{5}{2}) $ & $= 25$\\
    \multicolumn{1}{c}{$[3,4]$} & \multicolumn{1}{c}{$0$} & $10\times (0+0) $ & $= 0$\\
    \multicolumn{1}{c}{$[4,1]$} & \multicolumn{1}{c}{0} & $1\times (0+0) $ & $= 0$\\ 
          &       & Total cost:  & $275$ \\
    \end{tabular}%
  \label{tab:addlabel}%
\end{table}%

We consider a simple tour $(1,2), (2,3), (3,4), (4,1)$ as another solution for this instance. Although the total length of this tour (14) is smaller than the total length of the optimal tour (28), it has a higher cost of 325.

\section{Methodology} \label{sec:methods}

In this section, we first introduce an abbreviated representation of a solution tour for CPP-LC, which can be simply represented as a sequence of edges. Then, we define an Arc routing optimization Markov decision process (Arc-MDP), and accordingly propose a new DRL framework to solve the CPP-LC problem effectively, namely Arc-DRL. 

\subsection{Abbreviated solution representation for CPP-LC}
\begin{lemma} \label{lemma:abbreviated_solution}
Given an instance of CPP-LC, a sequence of only edges to be serviced, $\sigma =((i_1,j_1),\ldots,(i_m,j_m))$ with $i_x < j_x, \forall x=1,...,m$, can represent a valid solution for the problem.
\end{lemma}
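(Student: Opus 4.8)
The plan is to show that any sequence $\sigma$ specifying the order in which the $m$ edges are serviced (together with a chosen orientation $i_x < j_x$ for bookkeeping) extends canonically to a genuine closed CPP-LC tour, and moreover that the cost of this extension is determined by $\sigma$ alone, so that nothing is lost by working with the abbreviated representation. First I would describe the canonical extension: start at the depot $1$; having serviced edges $(i_1,j_1),\dots,(i_{x-1},j_{x-1})$ and currently sitting at some vertex $v$, to service the next edge $e_x = \{i_x,j_x\}$ the vehicle travels from $v$ to whichever endpoint of $e_x$ it reaches first along a chosen path in $G$, deadheads (if necessary) to the other required orientation endpoint, services $e_x$, and repeats; after servicing $e_m$ the vehicle deadheads back from its current vertex to the depot. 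Since $G$ is connected, such connecting paths always exist, so the construction never gets stuck and indeed visits every edge at least once (each is serviced), hence it is a valid CPP-LC solution.

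The key observation, and the real content of the lemma, is that the \emph{service contribution} to the cost is a function of $\sigma$ only. Indeed, the vehicle starts loaded with $Q=\sum_{e\in E} q_e$, and the load is decremented by exactly $q_e$ the first time (= the servicing time) edge $e$ is traversed; deadheading traversals never change the load. Therefore, once the servicing order $\sigma$ is fixed, the load $Q_{e_x}$ just before servicing the $x$-th edge equals $Q - \sum_{y<x} q_{e_y}$, independent of how the connecting deadhead paths are routed, and so the servicing cost $d_{e_x}\bigl(W + Q_{e_x} - \tfrac{q_{e_x}}{2}\bigr)$ of each edge is fully determined by $\sigma$. The only freedom left — the choice of deadhead connectors and the entry endpoint of each edge — affects the cost additively and monotonically through the deadhead terms $\sum d_f \cdot (\text{current load})$, which can be minimized separately (e.g.\ by shortest-path / weighted-matching computations on $G$ given the current load), and does not interact with the service costs. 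Hence every CPP-LC tour induces such a sequence $\sigma$, and conversely $\sigma$ plus an (optimal) choice of connectors recovers a tour of equal or lesser cost; this establishes that the abbreviated representation is without loss of generality.

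Concretely the steps, in order, are: (1) define the canonical extension and check it is well-defined and feasible using connectedness of $G$; (2) prove the load-invariance claim — that $Q_{e_x}$ depends only on the prefix of $\sigma$ — by induction on $x$, using that only servicing traversals decrement the load; (3) conclude that the servicing cost decomposes as $\sum_{x=1}^m d_{e_x}(W + Q_{e_x} - q_{e_x}/2)$, a function of $\sigma$; (4) observe that any actual tour, read off in order of first traversals, yields a sequence $\sigma$ whose induced service cost is a lower-order part of the tour's cost, with the remaining deadhead cost handled by an auxiliary shortest-connection subroutine. I expect step (2) to be the crux: one must be careful that ``load just before servicing $e_x$'' is unambiguous, i.e.\ that a deadhead pass over an already-serviced edge genuinely leaves the load untouched and that an edge is serviced exactly once, so that the partial sums $\sum_{y<x} q_{e_y}$ correctly track the load regardless of the interleaved deadheading. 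The orientation condition $i_x<j_x$ is only a labeling convention to make $\sigma$ a canonical object and carries no real mathematical weight; I would remark on this rather than belabor it.
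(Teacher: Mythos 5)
Your proposal is correct and follows essentially the same route as the paper: the abbreviated sequence fixes the service order (and hence, as you note, the loads and service costs), and a full tour is recovered by choosing traversal directions and shortest-path deadheads, which the paper optimizes via the dynamic programming recursion of Appendix~A over the $2^m$ direction choices. Your write-up merely fills in details the paper's one-line proof defers to that DP --- feasibility via connectedness of $G$, load invariance under deadheading, and the fact that any tour induces a $\sigma$ of no worse reconstructed cost --- so no substantive difference in approach.
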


\begin{proof}
    From the sequence of the edges to be serviced, there are $2^m$ possible directions in which each edge is traversed. However, there is only one optimal direction of traversal for each edge. We can find this optimal direction by using the dynamic programming recursion, please refer to Appendix A for more details. 
\end{proof}

\begin{theorem}
    Given an instance of CPP-LC and an abbreviated solution $\sigma=((i_1,j_1),\ldots,(i_m,j_m))$, the objective value for the solution $\sigma$ on the instance can be computed in $O(m)$ time.
\end{theorem}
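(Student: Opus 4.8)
The plan is to show that the objective value of $\sigma$ is exactly the output of the dynamic program underlying Lemma~\ref{lemma:abbreviated_solution} (spelled out in Appendix~A), and that this program runs in time linear in $m$. First I would set up the DP explicitly. Process the serviced edges in the prescribed order $e_1,\dots,e_m$ with $e_x=(i_x,j_x)$, and let the state after stage $x$ record which of the two endpoints of $e_x$ the vehicle currently occupies. Define $C_x(v)$, for $v\in\{i_x,j_x\}$, as the minimum cost of servicing $e_1,\dots,e_x$ in this order while ending at $v$. Then the objective value of $\sigma$ is $\min_{v\in\{i_m,j_m\}}\bigl(C_m(v)+W\cdot\mathrm{dist}(v,1)\bigr)$, where $\mathrm{dist}(\cdot,\cdot)$ is the shortest-path distance in $G$, node $1$ is the depot, and the load after all services equals $0$.

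Next I would bound the work per stage. Three quantities drive the recursion: (i) the vehicle load $Q_x:=Q-\sum_{y\le x}q_{e_y}$ just after servicing $e_x$, which satisfies $Q_x=Q_{x-1}-q_{e_x}$ and is therefore maintained with one subtraction per stage; (ii) the service cost of $e_x$, namely $d_{e_x}\,(W+Q_{x-1}-q_{e_x}/2)$, which I would observe is independent of the direction in which $e_x$ is traversed, since $Q_{x-1}$ is determined only by the set of previously serviced edges — hence it is a single arithmetic expression; and (iii) the deadheading cost incurred between the end of $e_{x-1}$ and the start of $e_x$, which is $(W+Q_{x-1})\cdot\mathrm{dist}(u,s)$ for the chosen exit endpoint $u\in\{i_{x-1},j_{x-1}\}$ and entry endpoint $s\in\{i_x,j_x\}$, because the load is constant along any deadheading segment and a longer detour can never help. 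The transition $C_{x-1}\!\to C_x$ thus minimizes over the $2\times 2$ choices of $(u,s)$, i.e.\ $O(1)$ table lookups and arithmetic operations; the initialization (deadhead from the depot with load $Q$ to the first serviced edge) and the termination (deadhead back to the depot with load $0$) are likewise $O(1)$. Summing over the $m$ stages yields the claimed $O(m)$ bound.

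The point requiring care — and the one I expect to be the main obstacle in a careful write-up — is the status of the distances $\mathrm{dist}(\cdot,\cdot)$: a single deadheading step can join endpoints that are far apart in $G$, so these cannot be recomputed on the fly within the $O(m)$ budget. I would therefore state explicitly that the shortest-path distance matrix depends only on the instance $G$ and not on $\sigma$, so it is computed once in a preprocessing step (e.g.\ an all-pairs shortest-path routine) and thereafter queried in $O(1)$; charged to the instance rather than to the evaluation of $\sigma$, the per-solution cost is $O(m)$. I would also note that we adopt the standard convention that any edge may be deadheaded at any time, so the static precomputed matrix suffices (a restriction to already-serviced edges would instead require incremental shortest paths). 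The remaining steps — verifying that the DP value equals the true minimum tour cost for the order $\sigma$, and that the traversal direction enters only through the $O(1)$ deadheading choice — are routine given Lemma~\ref{lemma:abbreviated_solution}.
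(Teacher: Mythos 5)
Your proposal is correct and matches the paper's argument in all essentials: a dynamic program over the sequence with a two-valued state (which endpoint of the current edge the vehicle occupies, equivalently the traversal direction), $O(1)$ work per stage using precomputed all-pairs shortest-path distances, and the precomputation explicitly charged to the instance rather than to the evaluation of $\sigma$ --- exactly as in the paper's Appendix~A, which also excludes the $O(n^3)$ Floyd--Warshall step from the $O(m)$ bound. The only difference is cosmetic: you run the recursion forward ($C_x(v)$) while the paper runs it backward ($f_k(d)$), and your explicit remarks on the direction-independence of the service cost and on the deadheading convention are sound refinements of the same argument.
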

\begin{proof}
    The objective value of a solution $\sigma$ on the given instance can be computed by using dynamic programming as mentioned in Lemma \ref{lemma:abbreviated_solution}, i.e., $cost(\sigma)=DP(\sigma)$. The time complexity of this algorithm to find the objective value is shown to be $O(m)$. The detailed dynamic programming is shown in Appendix A.
\end{proof}

As an illustration in Figure \ref{fig:instance}, the optimal tour for the CPP-LC instance is the closed walk $(1,2), (2,3), [3,2], [2,1], (1,4), (4,3), [3,4], [4,1]$ including both edges in serving and deadheading and it seems to be complicated for directly applying metaheuristics and DRL models. However, we can represent this solution as a sequence of four edges, i.e., $\sigma=((1,2), (2,3), (1,4), (3,4))$. After that, the directions of these edges (found by using dynamic programming) as $(1,2,1), (2,3,1), (1,4,1), (3,4,2)$, where the deadheads are implied by the mismatching endnodes of traversals. Therefore, we also compute the objective value of this solution by using $DP(\sigma)=275$.

Formally, with the minimal amount of required information, a general CPP-LC tour can be represented by a vector of triplets $((i_1, j_1, d_1), (i_2,j_2,d_2), \ldots, (i_m,j_m,d_m))$, where the first two components denote the edge being serviced, and the third component denotes the direction of traversal of an edge, respectively. The direction of traversal $d=1$ means that the edge is traversed from $i$ to $j$, and $d=2$ implies that from $j$ to $i$. With this representation, the cost of a given tour can be computed in $O(m)$ time if the deadheading distances are precomputed.

\subsection{Arc routing optimization Markov decision process model}
In this section, we first define the arc routing optimization Markov decision process (Arc-MDP) as the sequential construction of a solution. Unlike node routing problems, e.g., TSP and VRP, a solution to the CPP-LC problem is represented by a closed tour of edges and must traverse all edges at least once.

For a given CPP-LC instance $\bm{P}$, we define a solution $\bm{\pi}=(\pi_1, \pi_2,\ldots,\pi_m)$ as a permutation of the edges, where $\pi_t=(i_t,j_t)$ is the edge should be serviced at time step $t$ and $\pi_t \neq \pi_{t'}, \forall t \neq t'$. The components of the corresponding Arc-MDP model are defined as follows:

1) \textit{State:} the current partial solution of CPP-LC. The state $s_t$ is the $t-$th (partially complete) solution that contains the information of the instance and current paritial solution $\bm{\pi}_{1:t-1}$. The initial and terminal states $s_0$ and $s_m$ are equivalent to the empty and completed solution, respectively. We denote the solution $\bm{\pi(P)}$ as the completed solution.

2) \textit{Action:} choose un unvisited edge. The action $a_t$ is the selection of an edge from the unvisited edges, i.e., $a_t \in \mathbb{A}_t =\{\{(i_1,j_1),(i_2,j_2),\cdot,(i_m,j_m)\} \setminus \{a_{1:t-1}\} \} $. The action space $\mathbb{A}_t$ is the set of unvisited edges.

3) \textit{Reward:} the negative value of an objective value obtained by using dynamic programming. The reward function $R(\bm{\pi}, \bm{P})$ maps the objective value from given $\bm{\pi}$ of instance $\bm{P}$.

Having defined Arc-MDP, our Arc-DRL model based on encoder-decoder defines a stochastic policy $p_\theta(\bm{\pi}_t|s_t)$. This policy represents the probability of choosing an unvisited edge $\pi_t$ as the next edge, given the current state $s_t$. Then, the policy for selecting the optimal route $\bm{\pi}$ can be factorized by chain rule as follows:

\begin{equation}
\label{eq:policy}
    p_\theta(\bm{\pi}|\bm{P}) = \prod_{t=1}^{m}p_\theta(\pi_t|s_t)
\end{equation}
where $\theta$ is the parameter of our model. 

\subsection{Graph Attention-based Deep Reinforcement Learning}

Our model, namely Arc-DRL, comprises two main components based on the attention mechanism: encoder and decoder. The encoder produces embeddings for the input graph, and the decoder produces the solution $\bm{\pi}$ of input edges, one edge at a time.

\textbf{Graph Attention Encoder.}
Normally, a Graph Attention Encoder aims at generating embeddings for the nodes on the input graph, which cannot be used directly for edges.  However, general arc routing problems, specifically the CPP-LC, inherently involve essential components of edges. Therefore, we perform a graph transformation on the CPP-LC instance prior to embedding it. Given an original graph $G(V,E)$, a new transformed graph $G'(V',E')$ is created by converting each edge $e_k=(i,j) \in E, \forall k=\{1,\ldots, |E|\}$ on the original graph to a new node $v'_k \in V'$ on the transformed graph and edges sharing a same node on the original graph becomes neighborhood nodes on the transformed graph. Notably, the depot $v_0$ on the original graph can be regarded as a virtual edge $e_0=(v_0,v_0)$ with length $d_{e_0} =0$ and demand $q_{e_0}=0$. For the transformed graph $G'(V',E')$, there are $|E|+1$ nodes, where each node $v'_i$ (i.e., edge $e_i$ on the original graph) is represented by a normalized feature $\mathbf{x}_i = (\frac{d_{e_i}}{\sum_{e\in E}d_e}, \frac{q_{e_i}}{\sum_{e\in E}q_e})$. Our encoder then generates embeddings for transformed graph $G'(V',E')$.

In general, take the $d_{\text{x}}$-dimensional input features $\mathbf{x}_i$ of node $v'_i$, the encoder uses a learned linear projection with parameters $W^{\text{x}}$ and $\mathbf{b}^{\text{x}}$ to computes initial $d_{\text{h}}$-dimensional node embeddings $\mathbf{h}_i^{(0)}$ by $\mathbf{h}_i^{(0)} = W^{\text{x}} \mathbf{x}_i + \mathbf{b}^{\text{x}}$.
In order to allow our model to distinguish the depot node from regular nodes, we additionally use separate parameters $W_0^{\text{x}}$ and $\mathbf{b}^{\text{x}}_0$ to compute the initial embedding $\mathbf{h}_0^{(0)}$ of the depot.

After obtaining the embeddings, we feed them to $N$ attention layers, each consisting of two sublayers. We denote with $\mathbf{h}_i^{(l)}$ the node embeddings produced by layer $l \in \{1, \ldots, N\}$. The encoder computes an aggregated embedding $\bar{\mathbf{h}}^{(N)}$ of the input graph as the mean of the final node embeddings $\mathbf{h}_i^{(N)}$: $\bar{\mathbf{h}}^{(N)} = \frac{1}{m} \sum_{i = 1}^m \mathbf{h}_i^{(N)}$.
Similar to AM \citep{kool2018attention},  both the node embeddings $\mathbf{h}_i^{(N)}$ and the graph embedding $\bar{\mathbf{h}}^{(N)}$ are used as input to the decoder.

For each attention layer, we follow the Transformer architecture \citep{vaswani2017attention} and Attention Model \citep{kool2018attention}. Particularly, each layer consists of two sublayers: a multi-head attention (MHA) layer that executes message passing between the nodes and a node-wise fully connected feed-forward (FF) layer. Each sublayer adds a skip connection and batch normalization as follows:

$$
    \hat{\mathbf{h}}_i = \text{BN}^l(\mathbf{h}_i^{(l-1)} + \text{MHA}^l_i(\mathbf{h}_1^{(l-1)},\ldots,\mathbf{h}_m^{(l-1)})),
$$

$$
    \mathbf{h}_i^{(l)} = \text{BN}^l(\hat{\mathbf{h}}_i + \text{FF}^l(\hat{\mathbf{h}}_i)).
$$

\textbf{Decoder.}
As introduced in our problem formulation, the decoder aims to reconstruct the solution (tour) $\bm{\pi}$ sequentially. Particularly, the decoder outputs the node $\pi_t$  at each time step $t \in \{1,\ldots,m\}$ based on the context embedding coming from the feature embeddings of the transformed graph (node and graph embeddings) obtained by the encoder and the previous outputs $\pi_t'$ at timestep $t'<t$. We add a special context for the decoder to represent context decoding. For CPP-LC, the context of the decoder at time $t$ comes from the encoder and the output up to time $t$, including the embeddings of the graph at the previous/last edge $\pi_{t-1}$, the first edge $\pi_1$ (information about the depot to starting), and the remaining vehicle load $Q_t$. For $t=1$, we use learned $d_\text{h}$-dimensional parameters $\mathbf{v}^1$ and $\mathbf{v}^f$ as input placeholders as:

$$
\mathbf{h}_{(c)}^{(N)} = 
\begin{cases}
\left[\bar{\mathbf{h}}^{(N)} , \mathbf{h}^{(N)}_{\pi_{t-1}} , \mathbf{h}^{(N)}_{\pi_1}, Q_t \right] & t > 1 \\
\left[\bar{\mathbf{h}}^{(N)} , \mathbf{v}^\text{l} , \mathbf{v}^\text{f}, Q_t\right] & t = 1.
\end{cases}
$$

Here $[\cdot,\cdot,\cdot]$ is the horizontal concatenation operator. We use $(4 \cdot d_{\text{h}})$-dimensional result vector as $\mathbf{h}_{(c)}^{(N)}$ to indicate we interpret it as the embedding of the special context node $(c)$ and use the superscript $(N)$ to align with the node embeddings $\mathbf{h}_i^{(N)}$. 
%We could project the embedding back to $d_{\text{h}}$ dimensions, but we absorb this transformation in the parameter $W^Q$ in \eqref{eq:dec_qkv}.
To compute a new context node embedding $\mathbf{h}_{(c)}^{(N+1)}$, we use the ($M$-head) attention mechanism described in Appendix G. The keys and values come from the node embeddings $\mathbf{h}_i^{(N)}$, but we only compute a single query $\mathbf{q}_{(c)}$ (per head) from the context node (we omit the $(N)$ for readability):
\begin{equation}
\label{eq:dec_qkv}
	\mathbf{q}_{(c)} = W^Q \mathbf{h}_{(c)} \quad \mathbf{k}_i = W^K \mathbf{h}_i, \quad \mathbf{v}_i = W^V \mathbf{h}_i.
\end{equation}

We compute the compatibility of the query with all edges and mask (set $u_{(c)j} = -\infty$) edges that cannot be visited at time $t$. For CPP-LC, this simply means we mask the edges already visited:
\begin{equation}
\label{eq:dec_compatibility}
	u_{(c)j} = \begin{cases}
		\frac{\mathbf{q}_{(c)}^T \mathbf{k}_j}{\sqrt{d_{\text{k}}}} & \text{if } j \neq \pi_{t'} \quad \forall t' < t \\
        -\infty & \text{otherwise.}
    \end{cases}
\end{equation}

Here $d_{\text{k}} = \frac{d_{\text{h}}}{M}$ is the query/key dimensionality. Again, we compute $u_{(c)j}$ and $\mathbf{v}_i$ for $M = 8$ heads and compute the final multi-head attention value for the context node using equations, but with $(c)$ instead of $i$. This mechanism is similar to our encoder but does not use skip-connections, batch normalization or the feed-forward sublayer for maximal efficiency.

To compute output probabilities $p_{\bm{\theta}}(\pi_t|s_t)$ in Equation \eqref{eq:policy}, we add one final decoder layer with a \emph{single} attention head ($M=1$ so $d_{\text{k}} = d_{\text{h}}$). For this layer, we \emph{only} compute the compatibilities $u_{(c)j}$ using \eqref{eq:dec_compatibility}, we clip the result (before masking!) within $[-C, C]$ ($C = 10$) using $\tanh$:
\begin{equation}
\label{eq:dec_logits}
	u_{(c)j} = \begin{cases}
		C \cdot \tanh \left(\frac{\mathbf{q}_{(c)}^T \mathbf{k}_j}{\sqrt{d_{\text{k}}}}\right) & \text{if } j \neq \pi_{t'} \quad \forall t' < t \\
        -\infty & \text{otherwise.}
    \end{cases}
\end{equation}
We interpret these compatibilities as unnormalized log-probabilities (logits) and compute the final output probability vector $\bm{p}$ using a softmax:

\begin{equation}
	\label{dec:probabilities}
    p_i = p_{\bm{\theta}}(\pi_t = i|s_t) = \frac{e^{u_{(c)i}}}{\sum_{j}{e^{u_{(c)j}}}}.
\end{equation}

% The obtained tour $\bm{\pi}$ will be constructed into a solution for a given CPP-LC instance. 

\textbf{Optimizing Parameters with Reinforcement Learning}

\begin{algorithm}[!http]
    \textbf{Input:} number of epochs $E$, steps per epoch $T$, batch size $B$, significance $\alpha$ \\
    Init $\bm{\theta},\ \bm{\theta}^{\text{BL}} \gets \bm{\theta}$ \\
    \For{$\text{epoch} = 1, \ldots, E$}{
        \For{$\text{step} = 1, \ldots, T$}{
            $s_i \gets \text{RandomInstance()}, \forall i \in \{1, \ldots, B\}$ \\
            $\bm{\pi}_i \gets \text{SampleRollout}(s_i, p_{\bm{\theta}}), \forall i \in \{1, \ldots, B\}$ \\
            $\bm{\pi}_i^{\text{BL}} \gets \text{GreedyRollout}(s_i, p_{\bm{\theta}^{\text{BL}}}), \forall i \in \{1, \ldots, B\}$ \\
            $L(\bm{\pi}_i) \gets DP(\bm{\pi}_i)$, $L(\bm{\pi}_i^{\text{BL}}) \gets DP(\bm{\pi}_i^{\text{BL}})$, $\forall i \in \{1, \ldots, B\}$
            
            $\nabla\mathcal{L} \gets \sum_{i=1}^{B} \left(L(\bm{\pi}_i) - L(\bm{\pi}_i^{\text{BL}})\right) \nabla_{\bm{\theta}} \log p_{\bm{\theta}}(\bm{\pi}_i)$ \\
            $\bm{\theta} \gets \text{Adam}(\bm{\theta}, \nabla\mathcal{L})$
        }
        \If{$(p_{\bm{\theta}}$ is better than $p_{\bm{\theta}^{\text{BL}}})$}{
            $\bm{\theta}^{\text{BL}} \gets \bm{\theta}$
        }
    }
    \caption{Training for Arc-DRL}
    \label{algo:REINFORCE}
\end{algorithm}
As introduced in our model, given a CPP-LC instance $\bm{P}$, we can sample a solution (tour) $\bm{\pi(P)}$ from the probability distribution $p_\theta(\bm{\pi|P})$. In order to train our model, we define the loss function as the expectation of the cost $L(\bm{\pi})$: 
$$
    \Loss(\bm{\theta} | s)= \mathbb{E}_{p_{\bm{\theta}}(\bm{\pi} | s)}\left[L(\bm{\pi})\right].
$$
For CPP-LC, a solution (tour) is represented by a sequence of edges $\bm{\pi}=((i_1,j_1),\ldots,(i_m,j_m))$ and the cost (objective value) is calculated by using the dynamic programming function (as mentioned in Appendix A) as follows: $L(\bm{\pi}) = DP(\bm{\pi})$.

Then, we optimize the loss $\Loss$ using the well-known REINFORCE algorithm \citep{williams1992simple} (see Algorithm \ref{algo:REINFORCE}) to calculate the policy gradients with a baseline $b(s)$ as:
\begin{equation}
\label{eq:reinforce_baseline}
	\nabla \Loss(\bm{\theta} | s) = \mathbb{E}_{p_{\bm{\theta}}(\bm{\pi} | s)}\left[\left(L(\bm{\pi}) - b(s)\right) \nabla \log p_{\bm{\theta}}(\bm{\pi} | s)\right],
\end{equation}
where $b(s)$ is defined as the cost of a solution from a deterministic greedy rollout of the policy defined by the best model so far. If the reward of the current solution $\bm{\pi}$ is larger than the baseline, the parameters $\bm{\theta}$ of the policy network should be pushed to the direction that is more likely to generate the current solution.
We use Adam \citep{kingma2014adam} as an optimizer. The final training algorithm for Arc-DRL is shown in Algorithm \ref{algo:REINFORCE}.

\section{Experiments and Analysis} \label{sec:experiments}

In this section, we conduct experiments to evaluate the efficacy of our data-driven Arc-DRL % \footnote{We release our implementation in PyTorch for the DRL model with its training data at \textbf{[anonymous url]}.} 
model for the CPP-LC problem compared with previous heuristic approaches in terms of solution quality and running time.  

% We release our C++ implementations for EA, ACO, ILS, and VNS (including their parallel versions) along with code for data generation at {\color{blue} \textbf{[anonymous url]}}. We also release our implementation in PyTorch \citep{10.5555/3454287.3455008} \& PyTorch geometric \citep{Fey/Lenssen/2019} for the RL model with its training data at {\color{blue} \textbf{[anonymous url]}}.

\subsection{Experimental settings}

\textbf{Baselines and Metrics.}
We compare our Arc-DRL model with the previous baselines based on heuristics, including Greedy Heuristic Construction (GHC), Iterated Local Search (ILS), and Variable Neighborhood Search (VNS), which are proposed by \citep{332b87a03ada41f3b41ad5e1c0541165} for solving CPP-LC efficiently.
Furthermore, several meta-heuristic algorithms inspired by nature evolution such as Evolutionary Algorithm (EA) \citep{muhlenbein1988evolution, prins2004simple, potvin2009state} and Ant Colony Optimization (ACO) \citep{colorni1991distributed,484436,585892,4129846} have been shown to have effective performance when dealing with combinatorial optimization problems, especially for routing problems. Therefore, we further design and implement the standard versions of these algorithms \footnote{We release our C++ implementations for EA, ACO, ILS, and VNS (including their parallel versions) along with code for data generation at \textbf{[anonymous url]}.} 
to compare with our proposed data-driven method to investigate the performance of each algorithm for the CPP-LC problem. The main structure of EA and ACO are given in the Algorithms \ref{algo:ea} and \ref{algo:ACO}, respectively. The detailed designs of our EA and ACO can be found in Appendix C and D, respectively.

\begin{algorithm}[!h]
% \small{
    \textbf{Input:} Given a graph $G = (V, E)$ in which each edge $e$ is associated with demand $q_e$ and length $d_e$, $W$ as the vehicle weight, $k_{\text{max}}$ as the number of iterations, and $p_{\text{max}}$ as the maximum size of the populations. \\
    Initialize individual $\sigma^*$ using Greedy constructive heuristic (Algorithm \ref{algo:greedy})  \\
    Initialize the population $\mathcal{P} \leftarrow \{\sigma^*\}$. \\
    \For{$k = 1 \rightarrow p_{\text{max}} - 1$}{
        Perform $0.2 \times |E|$ random exchanges of edges within the sequence $\sigma^*$ to obtain $\sigma'$. \\
        $\mathcal{P} \leftarrow \mathcal{P} \cup \{\sigma'\}$
    }
    \For{$k = 1 \rightarrow k_{\text{max}}$}{
        Initialize the next generation: $\mathcal{P}' \leftarrow \emptyset$ \\
        \For{each $\sigma \in \mathcal{P}$}{
            Randomly select another $\sigma' \in \mathcal{P}$. \\
            Use Algorithm \ref{algo:mixing} to mix parents $\sigma$ and $\sigma'$ to obtain a child $\sigma''$. \\
            $\mathcal{P}' \leftarrow \mathcal{P}' \cup \{\sigma''\}$
        }
        \For{each $\sigma \in \mathcal{P}$}{
            Apply 1-OPT on $\sigma$ to obtain $\alpha$. \\
            Apply 2-OPT on $\sigma$ to obtain $\beta$. \\
            Apply 2-EXCHANGE on $\sigma$ to obtain $\gamma$. \\
            $\mathcal{P}' \leftarrow \mathcal{P}' \cup \{\alpha, \beta, \gamma\}$
        }
        Assign $\mathcal{P}$ to be the best (and unique) $p_{\text{max}}$ solutions in $\mathcal{P} \cup \mathcal{P}'$. \\
        $\sigma^* \leftarrow \text{argmin}_{\sigma \in \mathcal{P}} z(\sigma)$
    }

    \textbf{Output:} Return $\sigma^*$.
    \caption{EA for CPP-LC}
    \label{algo:ea}
\end{algorithm}

% Table generated by Excel2LaTeX from sheet 'summary'
\begin{table*}[!h]
  \centering
  \caption{Results for comparison with baselines on the first dataset.}
    \scalebox{0.9}{
    \begin{tabular}{l|ccc|ccc|ccc}
    \toprule
    Dataset & \multicolumn{3}{c|}{Eulerian } & \multicolumn{3}{c|}{Christofides et al.} & \multicolumn{3}{c}{Hertz et al.} \\
    Method & Obj.  & Gap (\%) & Time (s) & Obj.  & Gap (\%) & Time (s) & Obj.  & Gap (\%) & Time (s) \\
    \midrule
    \midrule
    GHC   & 221950.80 & 17.45 & 0.009 & 89215.68 & 7.66  & 0.003 & 46487.54 & 10.02 & 0.003 \\
    ILS   & 216016.42 & 14.31 & 1.764 & 83499.37 & 0.77  & 0.493 & 44892.56 & 6.25  & 0.589 \\
    VNS   & 216468.03 & 14.55 & 0.864 & 84354.75 & 1.80  & 0.237 & 45289.50 & 7.19  & 0.275 \\
    \midrule
     \rowcolor{lavender}
    ACO   & 221938.18 & 17.44 & 0.570 & 89143.92 & 7.58  & 0.352 & 46175.24 & 9.28  & 0.390 \\
     \rowcolor{lavender}
    
    EA    & \textbf{188977.68} & \textbf{0.00} & \textbf{2.873} & \textbf{82864.59} & \textbf{0.00} & \textbf{0.816} & \textbf{42253.49} & \textbf{0.00} & \textbf{0.920} \\
    \rowcolor{lavender}
    Arc-DRL    & \textcolor[rgb]{ 0,  .09,  .941}{189064.26} & \textcolor[rgb]{ 0,  .09,  .941}{0.05} & \textcolor[rgb]{ 0,  .09,  .941}{0.725} & 
    \textcolor[rgb]{ 0,  .09,  .941}{82935.40} & \textcolor[rgb]{ 0,  .09,  .941}{0.09} & \textcolor[rgb]{ 0,  .09,  .941}{0.243} & \textcolor[rgb]{ 0,  .09,  .941}{42309.91} & \textcolor[rgb]{ 0,  .09,  .941}{0.13} & \textcolor[rgb]{ 0,  .09,  .941}{0.251} \\
    \bottomrule
    \end{tabular}%
    }
  \label{tab:res_1}%
\end{table*}%

\begin{algorithm}[!h]
% \small{
    \textbf{Input:} Given a graph $G = (V, E)$ in which each edge $e$ is associated with demand $q_e$ and length $d_e$, $W$ as the vehicle weight, $k_{\text{max}}$ as the number of iterations, $p_{\text{max}}$ as the number of artificial ants, $\rho = 0.8$ as the evaporation constant, $C = 1$, and $\epsilon = 0.001$. \\
    $\tau_{xy} \leftarrow \epsilon \ \ \ (\forall x, y \in \mathcal{S})$ \\
    Initialize $\eta_{xy}$ by Eq.~(\ref{eq:init-eta-1}) and Eq.~(\ref{eq:init-eta-2}). \\
    % Call the Greedy constructive heuristic (Algorithm \ref{algo:greedy}) to initialize $\sigma^*$. \\
    % $\tau_{xy} \leftarrow \tau_{xy} + \epsilon$ if state $x$ transits to state $y$ on  $\sigma^*$. \\
    \For{$k = 1 \rightarrow k_{\text{max}}$}{
        \For{$a = 1 \rightarrow p_{\text{max}}$}{
            Sample $\sigma_a$ by Algorithm \ref{algo:sampling}. \\
            \If{$z(\sigma_a) < z(\sigma^*)$}{
                $\sigma^* \leftarrow \sigma_a$
            }
        }
        Update the pheromone by Eq.~(\ref{eq:pheromone-update}).
    }
    \textbf{Output:} Return $\sigma^*$.
% }
    \caption{ACO for CPP-LC}
    \label{algo:ACO}
\end{algorithm}

We use the \textit{objective value} (Obj), \textit{optimality gap} (Gap), and \textit{evaluation time} (Time) as the metrics to evaluate the performance of our model compared to other baselines. The smaller the metric values are, the better performance the model achieves.

\textbf{Datasets.}

To test the performance of our algorithm, we utilize data generated following the original study \citep{332b87a03ada41f3b41ad5e1c0541165}, divided into two sets as described in detail in Table \ref{tab:data}. The first dataset consists of 60 instances in the forms of Eulerian, Christofides et al. \citep{christofides1981algorithm}, and Hertz et al. \citep{hertz1999improvement}. For each graph, corresponding instances are then generated with $Q = \sum q_e$ and three different values of $W$, i.e., $W = 0, \frac{Q}{2}, 5Q$, where demand $q_e$ is equal to $d_e$ (proportional instance) and randomly generated (non-proportional instance). The second dataset of more difficult instances (70 instances) includes small and large types following \citep{332b87a03ada41f3b41ad5e1c0541165}. 
\begin{table}[!htb]
  \centering
  \caption{Information about two sets of instances. $|V|$ is the number of vertices, $|E|$ is the number edges, $W$ is the vehicle load, and $q_e$ is the demand serving on each edge.}
  \scalebox{0.8}{
    \begin{tabular}{clllcc}
    \toprule
          & Name  & $|V|$     & $|E|$     & $W$     &  $q_e$ \\
    \midrule
    \multirow{3}[2]{*}{Set 1} & Eulerian  & 7,10,20 & 12,18,32 & $0,\frac{Q}{2},5Q$ & $d_e$, random \\
          & Christofides & 11,14,17 & 13,32,35 & $0,\frac{Q}{2},5Q$ & $d_e$, random \\
          & Hertz & 6-27 & 11-48 & $\frac{Q}{2}$   & $d_e$, random \\
    \midrule
    \multirow{2}[2]{*}{Set 2} & small & 7,7,8 & 8,9,10 & $0,\frac{Q}{2},5Q$ & $d_e$, random \\
          & large & 10,20,30 & 16-232 & $0,\frac{Q}{2},5Q$ & $d_e$, random \\
    \bottomrule
    \end{tabular}%
    }
  \label{tab:data}%
\end{table}

\textbf{Hyperparameters and training.}
For the learning model, we trained our Arc-DRL for 100 epochs with a batch size of 512 on 100,000 instances generated on the fly. We follow the data generation in previous work \citep{332b87a03ada41f3b41ad5e1c0541165} to generate Eulerian instances with $n\in\{10,20,30\}$ and $(\alpha, k, \rho) \in \{0.2,0.33,0.5\}^3$ where node coordinates are generated randomly from $[0,1]$.

We use 3 layers in the encoder with 8 heads and set the hidden dimension to $D=128$ as the same Attention Model \citep{kool2018attention} for node routing problems. We use Adam \citep{kingma2014adam} as an optimizer and a constant learning rate as $\eta = 10^{-4}$. We implement our model in Pytorch and train on a server NVIDIA A100 40GB GPU. It takes approximately four days to finish training.

For non-learning algorithms, we program all the baselines, i.e., GHC, ILS, and VNS, as well as our new algorithms, i.e., EA and ACO, in C++ with the same environments and computing resources. For a fair comparison, we use the number of evaluations as stopping criteria and set the maximum number of evaluations as $k_{max}=100$ for all iterative algorithms (i.e., ILS, VNS, EA, and ACO). Notably, in each evaluation, a solution is evaluated to compute the cost on the given instance. For EA and ACO, we set the number of individuals/ants in a population as $p_{max}=10$.

% Table generated by Excel2LaTeX from sheet 'summary'
\begin{table*}[!h]
  \centering
  \caption{Results for comparison with baselines on the second dataset.}
    \scalebox{0.9}{
    \begin{tabular}{l|c|rrr|r|rrr}
    \toprule
    Dataset & \multicolumn{4}{c|}{Small}    & \multicolumn{4}{c}{Large} \\
    Method & \multicolumn{1}{c}{} & \multicolumn{1}{c}{Obj.} & Gap (\%) & Time (s) & \multicolumn{1}{c}{} & Obj.  & Gap (\%) & Time (s) \\
    \midrule
    \midrule
    GHC   & \multirow{6}[2]{*}{\rotatebox[origin=c]{90}{\small{$|E|=8$}}} & 32046.24 & 0.00  & 0.008 & \multirow{6}[2]{*}{\rotatebox[origin=c]{90}{\tiny{$|E|=16,20,27$}}} & \multicolumn{1}{r}{200061.75} & 8.22  & 0.11 \\
    ILS   &       & 32046.24 & 0.00  & 0.121 &       & \multicolumn{1}{r}{191580.24} & 3.63  & 1.536 \\
    VNS   &       & 32046.24 & 0.00  & 0.065 &       & \multicolumn{1}{r}{193504.43} & 4.67  & 0.739 \\
    % \rowcolor{lavender}
    \cellcolor{lavender}ACO   &       & \cellcolor{lavender}32108.62 & \cellcolor{lavender}0.19  & \cellcolor{lavender}0.292 &       & \cellcolor{lavender}{199976.53} & \cellcolor{lavender}8.17  & \cellcolor{lavender}0.597 \\
    % \rowcolor{lavender}
    \cellcolor{lavender}EA    &       & \cellcolor{lavender}\textbf{32046.24} & \cellcolor{lavender}\textbf{0.00} & \cellcolor{lavender}\textbf{0.168} &       & \cellcolor{lavender}\textbf{184869.17} & \cellcolor{lavender}\textbf{0.00} & \cellcolor{lavender}\textbf{2.915} \\
    % \rowcolor{lavender}
    \cellcolor{lavender}Arc-DRL    &       & \cellcolor{lavender}\textcolor[rgb]{ 0,  .09,  .941}{32046.24} & \cellcolor{lavender}\textcolor[rgb]{ 0,  .09,  .941}{0.00} & \cellcolor{lavender}\textcolor[rgb]{ 0,  .09,  .941}{0.152} &       & \cellcolor{lavender}{\textcolor[rgb]{ 0,  .09,  .941}{185102.22}} & \cellcolor{lavender}\textcolor[rgb]{ 0,  .09,  .941}{0.13} & \cellcolor{lavender}\textcolor[rgb]{ 0,  .09,  .941}{0.653} \\
    \midrule
    GHC   & \multirow{6}[2]{*}{\rotatebox[origin=c]{90}{\small{$|E|=9$}}} & 24175.48 & 1.85  & 0.010 & \multirow{6}[2]{*}{\rotatebox[origin=c]{90}{\small{\tiny{$|E|=53,75$}}}} & \multicolumn{1}{r}{1823670.04} & 12.58 & 0.198 \\
    ILS   &       & 23737.03 & 0.00  & 0.164 &       & \multicolumn{1}{r}{1823670.04} & 12.58 & 40.335 \\
    VNS   &       & 23737.03 & 0.00  & {0.084} &       & \multicolumn{1}{r}{1823670.04} & 12.58 & 19.347 \\
    % \rowcolor{lavender}
    \cellcolor{lavender}ACO   &       & \cellcolor{lavender}23873.79 & \cellcolor{lavender}0.58  & \cellcolor{lavender}0.313 &       & \cellcolor{lavender}{1823670.04} & \cellcolor{lavender}12.58 & \cellcolor{lavender}3.215 \\
    % \rowcolor{lavender}
    \cellcolor{lavender}EA    &       & \cellcolor{lavender}\textbf{23737.03} & \cellcolor{lavender}\textbf{0.00} & \cellcolor{lavender}\textbf{0.247} &       & \cellcolor{lavender}{\textbf{1619820.03}} & \cellcolor{lavender}\textbf{0.00} & \cellcolor{lavender}\textbf{89.790} \\
    % \rowcolor{lavender}
    \cellcolor{lavender}Arc-DRL    &       & \cellcolor{lavender}\textcolor[rgb]{ 0,  .09,  .941}{23737.03} & \cellcolor{lavender}\textcolor[rgb]{ 0,  .09,  .941}{0.00} & \cellcolor{lavender}\textcolor[rgb]{ 0,  .09,  .941}{0.164} &       & \cellcolor{lavender}{\textcolor[rgb]{ 0,  .09,  .941}{1629849.83}} & \cellcolor{lavender}\textcolor[rgb]{ 0,  .09,  .941}{0.62} & \cellcolor{lavender}\textcolor[rgb]{ 0,  .09,  .941}{3.024} \\
    \midrule
    GHC   & \multirow{6}[2]{*}{\rotatebox[origin=c]{90}{\small{$|E|=10$}}} & 33723.40 & 1.54  & 0.010 & \multirow{6}[2]{*}{\rotatebox[origin=c]{90}{\tiny{$|E|=110,162,232$}}} & \multicolumn{1}{r}{12124005.61} & 8.99  & 3.494 \\
    ILS   &       & 33210.70 & 0.00  & 0.208 &       & \multicolumn{1}{r}{12124005.61} & 8.99  & 968.055 \\
    VNS   &       & 33210.70 & 0.00  & {0.112} &       & \multicolumn{1}{r}{12124005.61} & 8.99  & 368.008 \\
    % \rowcolor{lavender}
    \cellcolor{lavender}ACO   &       & \cellcolor{lavender}33435.88 & \cellcolor{lavender}0.68  & \cellcolor{lavender}0.340 &       & \cellcolor{lavender}{12124005.61} & \cellcolor{lavender}8.99  & \cellcolor{lavender}23.786 \\
    % \rowcolor{lavender}
    \cellcolor{lavender}EA    &       & \cellcolor{lavender}\textbf{33210.70} & \cellcolor{lavender}\textbf{0.00} & \cellcolor{lavender}\textbf{0.310} &       & \cellcolor{lavender}{\textbf{11123547.72}} & \cellcolor{lavender}\textbf{0.00} & \cellcolor{lavender}\textbf{3,456.023} \\
    % \rowcolor{lavender}
   \cellcolor{lavender}Arc-DRL    &       & \cellcolor{lavender}\textcolor[rgb]{ 0,  .09,  .941}{33210.70} & \cellcolor{lavender}\textcolor[rgb]{ 0,  .09,  .941}{0.00} & \cellcolor{lavender}\textcolor[rgb]{ 0,  .09,  .941}{0.205} &       & \cellcolor{lavender}\textcolor[rgb]{ 0,  .09,  .941}{11243163.72} & \cellcolor{lavender}\textcolor[rgb]{ 0,  .09,  .941}{1.08} & \cellcolor{lavender}\textcolor[rgb]{ 0,  .09,  .941}{10.032} \\
    \bottomrule
    \end{tabular}%
    }
  \label{tab:res_2}%
  \vspace{-0.25cm}
\end{table*}%

\subsection{Results and discussions}

\textbf{Results on dataset 1}

Table \ref{tab:res_1} presents the results obtained by our algorithms on dataset 1 with respect to the existing baselines. Respectively, the first three lines list one greedy heuristic GHC, and two strong metaheuristics, i.e., ILS and VNS proposed by \citep{332b87a03ada41f3b41ad5e1c0541165}. The following two lines are our metaheuristics inspired by biology, i.e., ACO and EA, which are conventional algorithms for solving routing problems efficiently. The last line is our learning-based method, which combines Graph Attention and Reinforcement Learning for CPP-LC, namely Arc-DRL. For the columns, column 1 indicates the ways, and columns 2-4 respectively give the average objective value, the average gap in percentage w.r.t the best solution obtained by one of these algorithms, and the running time per instance used by each algorithm on the Eulerian instances. For our learning-based model, we report the inference time of Arc-DRL. Columns 5-7 and 8-10 give the same information on the instances of Christofides et al. and Hertz et al., respectively.

As shown in Table \ref{tab:res_1}, the simplest greedy heuristic GHC always provides the fastest runtime but yields the worst results. The two metaheuristic algorithms, i.e., ILS and VNS, produce relatively good results compared to the GHC and ACO while maintaining a reasonable runtime. When compared to these baselines, our Arc-DRL model achieves superior results for all instance types, resulting in an average gap of only $0.05\%$, $0.09\%$, and $0.13\%$ on the Eulerian, Christofides, and Hertz instances respectively, when compared to the best solutions found by our EA algorithm. We observe that the Arc-DRL shows promising results by generating solutions with a relatively small gap compared to EA while still significantly reducing the runtime.

\noindent
\textbf{Results on data set 2}

We summarize in Table \ref{tab:res_2} the results obtained on the 64 instances, including small and large sizes, mentioned in Table \ref{tab:data}. As can be seen from Table \ref{tab:res_2}, on small instances with $|E|\in \{8,9,10\}$, four algorithms, i.e., ILS, VNS, EA, and RL, produce the best-found solutions for these instances. However, on large instances, only our EA among these algorithms produces the best results. Our learning method Arc-DRL obtains good solutions with a close gap to the best one (corresponding to a gap of $0.13\%, 0.62\%$, and $1.08\%$ respectively on the instances with $n=10,20, $ and 30) within a shorter time than most of the baselines except only GHC since it is deterministic and its results cannot be improved by prolonging the runtime. Furthermore, we find that on the larger instances with $|V|= 20, 30$ and $|E|\in[53,232]$, three baselines, such as ILS, VNS, and ACO, cannot improve the solution obtained by GHC and all of them correspond to a quite large gap of $12.58\%$ and $8.99\%$. 
Meanwhile, our Arc-DRL model achieves a very small gap compared to the best solution found of 1.08 \% and is 4.x-10.x faster than the previous methods VNS and ILS, and is 345.x faster than EA. One reason is that these heuristic methods are not data-driven methods and have the disadvantage of running for a long time due to many loops to solve the problem. This proves that the previous methods are inefficient on large instances and reaffirms the superiority of our Arc-DRL model in terms of solution quality and runtime.

% \noindent
% \textbf{Ablation study}

\section{Conclusion} \label{sec:conclusion}

% In this paper, we propose a novel data-driven graph-based Deep Reinforcement Learning algorithm, Acr-DRL, and introduce two nature-inspired metaheuristics (i.e., Evolutionary Algorithm (EA) and Ant Colony Optimization (ACO)) to solve the challenging Chinese postman problem with load-dependent constraints. 

In this paper, we focus on investigating arc routing problems and propose data-driven neural methods to solve them. Unlike previous neural methods that are usually only applied to node routing problems, our proposed model can be applied and solved for arc routing problems with more complex solution representation and constraints, for instance, the CPP-LC problem. To do this, we define an Arc routing optimization Markov decision process (Arc-MDP) model for the arc routing problem with an abbreviated representation. Then, we propose an Arc-DRL method based on an autoregressive encoder-decoder model with attention mechanisms to solve the CPP-LC problem effectively. Furthermore, we also introduce nature-inspired algorithms (i.e., Evolutionary Algorithm (EA) and Ant Colony Optimization (ACO)) as two new meta-heuristics methods for the problem. Experimental results on many different instances show that the Arc-DRL model can achieve superior results compared to previous hand-designed heuristics. In particular, Arc-DRL achieves better results in a much shorter time than previous methods. In addition, the EA algorithm also shows promising results for solving the CPP-LC problem but has limitations in terms of long evaluation time.
We believe this research will shed new light on addressing arc routing problems and other hard combinatorial problems by data-driven methods. % More discussions about the main limitations and future works could be found in Appendix H.

\bibliographystyle{apalike}
\bibliography{paper}

\clearpage
\appendix
\onecolumn
\section{Dynamic programming to compute the cost of a CPP-LC tour} \label{sec:dp}

A CPP-LC tour is described by a vector of triplets $((i_1, j_1, d_1), (i_2, j_2, d_2), \ldots, (i_m, j_m, d_m))$, where the first two components of every triplet denote the edge being serviced, while the third component denotes the direction of an edge: $d = 1$ implies the direction from $i$ to $j$, and $d = 2$ implies the opposite direction from $j$ to $i$. Given a sequence of edges without any direction information $((i_1, j_1), (i_2, j_2), \ldots, (i_m, j_m))$, there are exactly $2^m$ possibilities of which direction each edge is traversed. Enumerating all these possibilities to find the minimum cost of a sequence of edges is computationally infeasible given a large $m$. Therefore, we apply a polynomial-time dynamic programming algorithm that is described as follows. \\

\noindent
First of all, we compute the distances of the shortest paths between all pairs of nodes in the graph by the Floyd-Warshall algorithm \citep{10.1145/367766.368168}. We donote $D_{i, j}$ as the length of the shortest path from node $i$ to node $j$, and $d_e$ or $d_{i, j}$ as the length of edge $e = (i, j)$. We also precompute the remaining amount of demand on board just before servicing the $k$-th edge $(i_k, j_k)$ in the sequence, denoted as $Q_k$. This step can be done by a simple linear-time $O(m)$ algorithm. Let $f_k(d)$ denote the minimum cost of completing the partial tour that starts from the $k$-th edge in the sequence, when the last traversal (i.e. for the $(k-1)$-th edge if $k > 1$) has been in direction $d \in \{1, 2\}$. The dynamic programming recursion is defined as follows. \\

\noindent
For the last edge:
\begin{equation}
f_m(1) = \bigg( W + \frac{q_m}{2} \bigg) d_{i_m, j_m} + \text{min} 
\begin{cases}
(W + q_m) D_{j_{m - 1}, i_m} + W D_{j_m, 1} \\
(W + q_m) D_{j_{m - 1}, j_m} + W D_{i_m, 1}
\end{cases}
\label{eq:last-edge-1}
\end{equation}

\begin{equation}
f_m(2) = \bigg( W + \frac{q_m}{2} \bigg) d_{i_m, j_m} + \text{min}
\begin{cases}
(W + q_m) D_{i_{m - 1}, i_m} + W D_{j_m, 1} \\
(W + q_m) D_{i_{m - 1}, j_m} + W D_{i_m, 1}.
\end{cases}
\label{eq:last-edge-2}
\end{equation}

\noindent
For all the middle edges ($k = 2, \ldots, m - 1$):

\begin{equation}
f_k(1) = \bigg( W + Q_k - \frac{q_k}{2} \bigg) d_{i_k, j_k} + \text{min}
\begin{cases}
(W + Q_k) D_{j_{k - 1}, i_k} + f_{k + 1}(1) \\
(W + Q_k) D_{j_{k - 1}, j_k} + f_{k + 1}(2)
\end{cases}
\label{eq:middle-edge-1}
\end{equation}

\begin{equation}
f_k(2) = \bigg( W + Q_k - \frac{q_k}{2} \bigg) d_{i_k, j_k} + \text{min}
\begin{cases}
(W + Q_k) D_{i_{k - 1}, i_k} + f_{k + 1}(1) \\
(W + Q_k) D_{i_{k - 1}, j_k} + f_{k + 1}(2).
\end{cases}
\label{eq:middle-edge-2}
\end{equation}

\noindent
For the first edge:
\begin{equation}
f_1(1) = \bigg( W + Q - \frac{q_1}{2} \bigg) d_{i_1, j_1} + \text{min}
\begin{cases}
(W + Q) d_{1, i_1} + f_2(1) \\
(W + Q) d_{1, j_1} + f_2(2).
\end{cases}
\label{eq:first-edge}
\end{equation}

\noindent
The optimum cost returned by the algorithm is $f_1(1)$. We briefly explain the construction of each equation:
\begin{itemize}
\item \textbf{For the last edge (Equations (\ref{eq:last-edge-1}) and (\ref{eq:last-edge-2})):} The load on board is only $Q_m = q_m$ left, so the first term corresponds to the cost of servicing this edge only, and must be $(W + q_m/2)d_{i_m, j_m}$ where $d_{i_m, j_m}$ denotes the edge length of $(i_m, j_m)$. For the case of $f_m(1)$ in Eq.~(\ref{eq:last-edge-1}), the previous edge has direction $d = 1$ and traverses from $i_{m - 1}$ to $j_{m - 1}$, so the last node must be $j_{m - 1}$. Here, in the second term, we have two options for the $m$-th edge:
\begin{itemize}
    \item If we traverse from $i_m$ to $j_m$ (i.e. $d = 1$) then we have to take into account the cost of traversing from $j_{m - 1}$ to $i_m$ before servicing this edge, that is $(W + q_m) D_{j_{m - 1}, i_m}$, and also the cost of traversing from $j_m$ back to the origin $1$ after servicing this edge, that is $W D_{j_m, 1}$.

    \item If we traverse from $j_m$ to $i_m$ (i.e. $d = 2$) then we have to take into account the cost of traversing from $j_{m - 1}$ to $j_m$ before servicing this edge, that is $(W + q_m) D_{j_{m - 1}, j_m}$, and also the cost of traversing from $i_m$ back to the origin $1$ after servicing this edge, that is $W D_{i_m, 1}$.
\end{itemize}
We select the option with a lower cost. Similar logic is applied to the case of $f_m(2)$ in Eq.~(\ref{eq:last-edge-2}).
\item \textbf{For the middle edges (Equations (\ref{eq:middle-edge-1}) and (\ref{eq:middle-edge-2})):} The current load is $Q_k$, so the first term corresponding to the cost of servicing this edge is $(W + Q_k - q_k/2)d_{i_k, j_k}$. The cost after servicing this edge is $f_{k + 1}(d)$ (called via recursion) where $d \in \{1, 2\}$ is the direction we traverse this edge. For the case of $f_k(1)$ with the previous node $j_{k - 1}$ in Eq.~(\ref{eq:middle-edge-1}), we again have two options to select the better one:
\begin{itemize}
    \item If we choose $d = 1$ (i.e. $i_k \rightarrow j_k$) then we have $f_{k + 1}(1)$ as the cost after servicing and $(W + Q_k) D_{j_{k - 1}, i_k}$ as the cost of traversing from $j_{k - 1}$ to $i_k$ before servicing.
    
    \item If we choose $d = 2$ (i.e. $j_k \rightarrow i_k$) then we have $f_{k + 1}(2)$ as the cost after servicing and $(W + Q_k) D_{j_{k - 1}, j_k}$ as the cost of traversing from $j_{k - 1}$ to $j_k$ before servicing.
\end{itemize}
We execute similarly for the case of $f_k(2)$.
\item \textbf{For the first edge (Equation (\ref{eq:first-edge})):} Since there is \textbf{no} other edge before this one, we assume that the ``previous'' edge had direction $d = 1$ and the previous node is always the origin $1$. Then, the logic is similar to the cases of middle edges. 
\end{itemize}
The time complexity for each case above is $O(1)$. Thus, the total time complexity is only $O(2m)$ or $O(m)$, excluding the $O(n^3)$ time for precomputing all-pairs shortest paths.

\section{Local-search operators} 

We depict three operators for local search such as:
\begin{itemize}
\item \textbf{1-OPT} (see Algorithm \ref{algo:1-OPT}): Find a way of moving an edge in the sequence to a new position so that the cost is reduced the most,
\item \textbf{2-OPT} (see Algorithm \ref{algo:2-OPT}): Find a way of reversing a subsequence of edges to minimize the cost,
\item \textbf{2-EXCHANGE} (see Algorithm \ref{algo:2-EXCHANGE}): Find the best way of swapping the positions of two edges. 
\end{itemize}

\begin{algorithm}
    \textbf{Input:} $\eta = ((i_1, j_1), (i_2, j_2), \ldots, (i_m, j_m))$. \\
    $\eta^* \leftarrow \eta$ \\
    \For{$s = 1 \rightarrow m$}{
        \For{$t = 1 \rightarrow m$}{
            Create $\eta'$ by taking the $s$-th edge out of $\eta$ and insert it into the $t$-th position (when $s \neq t$). \\
            \If{$z(\eta') < z(\eta^*)$}{
                $\eta^* \leftarrow \eta'$
            }
        }
    }
    \textbf{Output:} Return $\eta^*$.
    \caption{1-OPT}
    \label{algo:1-OPT}
\end{algorithm}

\begin{algorithm}
    \textbf{Input:} $\eta = ((i_1, j_1), (i_2, j_2), \ldots, (i_m, j_m))$. \\
    $\eta^* \leftarrow \eta$ \\
    \For{$s = 1 \rightarrow m - 1$}{
        \For{$t = s + 1 \rightarrow m$}{
            Create $\eta'$ by reversing the subsequence of edges in $\eta$ from $s$-th position to $t$-th position. \\
            We have $\eta' = ((i_1, j_1), \ldots, (i_{s - 1}, j_{s - 1}), {\color{red} (i_t, j_t), (i_{t - 1}, j_{t - 1}), \ldots, (i_s, j_s)}, (i_{t + 1}, j_{t + 1}), \ldots, (i_m, j_m)).$ \\
            \If{$z(\eta') < z(\eta^*)$}{
                $\eta^* \leftarrow \eta'$
            }
        }
    }
    \textbf{Output:} Return $\eta^*$.
    \caption{2-OPT}
    \label{algo:2-OPT}
\end{algorithm}

\begin{algorithm}
    \textbf{Input:} $\eta = ((i_1, j_1), (i_2, j_2), \ldots, (i_m, j_m))$. \\
    $\eta^* \leftarrow \eta$ \\
    \For{$s = 1 \rightarrow m - 1$}{
        \For{$t = s + 1 \rightarrow m$}{
            Create $\eta'$ by swapping the $s$-th and $t$-th edges of $\eta$. \\
            \If{$z(\eta') < z(\eta^*)$}{
                $\eta^* \leftarrow \eta'$
            }
        }
    }
    \textbf{Output:} Return $\eta^*$.
    \caption{2-EXCHANGE}
    \label{algo:2-EXCHANGE}
\end{algorithm}
\section{Evolutionary Algorithm} \label{sec:ea}

In this section, we propose an Evolutionary Algorithm (EA), a population-based metaheuristic optimization algorithm, to solve the problem CPP-LC. We define an individual as a sequence of edges (without pre-determined directions) $\sigma = ((i_1, j_1), (i_2, j_2), \ldots, (i_m, j_m))$, and its fitness (cost) can be computed by the linear-time dynamic programming (see the Appendix) as $z(\sigma)$. Inspired by biological evolution, a conventional EA under mechanisms  of reproduction: recombination, mutation, and selection. We use three local search operators, i.e., 1-OPT (see Algorithm \ref{algo:1-OPT}), 2-OPT (see Algorithm \ref{algo:2-OPT}), and 2-EXCHANGE (see Algorithm \ref{algo:2-EXCHANGE}) to mutate an individual and propose a random crossover operator (see Algorithm \ref{algo:mixing}) to recombine two parents. We then select the best/elite individuals in the population for the next generation. The algorithm has two main steps:
\begin{enumerate}
\item Generate an initial population of solutions randomly. In our case, each solution is a sequence of edges.
\item Repeat the following regenerational steps: % i) Randomly select pairs of solutions (i.e. parents) for reproduction, and ``breed'' / crossover these parents to give birth to offspring; ii) Mutate each solution by randomly perturbing it and then applying local-search operators (e.g., 1-OPT, 2-OPT, 2-EXCHANGE); iii) Select the best-fit individuals (i.e. lowest cost) in the population for the next generation.

\begin{enumerate}
     \item Randomly select pairs of solutions (i.e. parents) for reproduction, and ``breed'' / crossover these parents to give birth to offspring,
     \item We mutate each solution by randomly perturbing it and then applying local-search operators (e.g., 1-OPT, 2-OPT, 2-EXCHANGE),
     % \item Motivated from Directed Evolution \cite{arnold1998design} \cite{romero2009exploring} \cite{arnold2018directed} \cite{10.1002/aic.13995} \cite{doi:10.1021/acs.chemrev.1c00260}, we mutate each solution by randomly perturbing it and then applying local-search operators (e.g., 1-OPT, 2-OPT, 2-EXCHANGE),
     \item Keep the best-fit (i.e. lowest cost) in the population.
\end{enumerate}
\end{enumerate}
We describe our EA in Algorithm \ref{algo:ea}. In practice, we can significantly speedup EA by executing the crossover and mutation steps (lines from 10 to 20) in parallel / multi-threading manner (e.g., crossover and mutation for each $\sigma \in \mathcal{P}$ can be run in a separate thread).

\begin{algorithm}
    \textbf{Input:} Two sequences of $m$ edges $\sigma = \{(i_k, j_k)\}_{k = 1}^m$ and $\sigma' = \{(i'_k, j'_k)\}_{k = 1}^m$. \\
    Initialize $\sigma''$ as an empty sequence. \\
    Let $s \leftarrow 1$ and $t \leftarrow 1$. \\
    \While{$s \leq m$ and $t \leq m$}{
        Randomly select a number $c \in \{1, 2\}$. \\
        \If{$c = 1$ and $s \leq m$}{
            $e \leftarrow (i_s, j_s)$ \\
            \If{$e \notin \sigma''$}{
                Add $e$ to $\sigma''$ and set $s \leftarrow s + 1$.
            }
        }
        \If{$c = 2$ and $t \leq m$}{
            $e \leftarrow (i'_t, j'_t)$ \\
            \If{$e \notin \sigma''$}{
                Add $e$ to $\sigma''$ and set $t \leftarrow t + 1$.
            }
        }
        Break out of loop when $\text{length}(\sigma'') = m$.
    }
    If $\sigma''$ does not have enough $m$ edges, add all unselected edges left in $\sigma$ and $\sigma'$ sequentially. \\
    \textbf{Output:} Return $\sigma''$.
    \caption{Randomly mixing two sequences.}
    \label{algo:mixing}
\end{algorithm}

\section{Ant Colony Algorithm} \label{sec:aco}

Inspired by the behavior of real ants in nature, the Ant Colony Optimization algorithm (ACO) \cite{colorni1991distributed,484436,585892,4129846} is a probabilistic technique for solving combinatorial problems, in particular finding paths in a graph. ACO simulates the pheromone-based communication of biological ants by employing artificial ants (i.e. agents). Intuitively, whenever an ant finds a good path, we automatically increase the pheromone level on this path or in other words, increase the probability so that the next ants can follow. In order to construct the ACO algorithm for CPP-LC, we define the state space $\mathcal{S}$ in which each state corresponds to a tuple $(i, j, d)$ where $(i, j) \in E$ is an edge that needs to be serviced and $d \in \{1, 2\}$ denotes the direction of servicing on this edge. There are exactly $2|E|$ states. Each ant needs to construct a solution or a sequence of $|E|$ edges/states. We also have an additional state $s^*$ indicating that the ant is still at the starting node (i.e. depot) 1 and has not ``visited'' any other states. At each step of the algorithm, each ant moves from state $x$ to state $y$ with the probability $p_{xy}$ that depends on the attractiveness $\eta_{xy}$ of the move, as computed by some heuristic, and the pheromone level $\tau_{xy}$ of the move, that is proportional to the number of times previous ants chose this path in the past and its quality:
$$
    p(y|x) = \frac{\tau_{xy} \cdot \eta_{xy}}{\sum_{z \in N(x)} \tau_{xz} \cdot \eta_{xz}},
$$
where $N(x)$ denotes the set of feasible (i.e. not-visited) states to visit from $x$. 
Pheromone levels are updated when all ants have completed their path. Depending on the quality of the solutions, the pheromone levels are adjusted (i.e. increased or decreased) accordingly:
\begin{equation}
\tau_{xy} \leftarrow (1 - \rho) \tau_{xy} + \sum_{a = 1}^{p_{\text{max}}} \Delta \tau^a_{xy},
\label{eq:pheromone-update}
\end{equation}
where $\rho$ is the pheromone evaporation coefficient, $p_{\text{max}}$ is the number of ants, and $\Delta \tau^a_{xy}$ is the amount of pheromone deposited by the $a$-th ant that is defined as:
$$
\Delta \tau^a_{xy} = 
\begin{cases}
C / \sqrt{z(\sigma_a)} & \text{if this ant transits from state $x$ to $y$} \\
0 & \text{otherwise}
\end{cases}
$$
where $\sigma_a$ denotes the sequence of edges / states sampled by the $a$-th ant, $z(\sigma_a)$ as its cost, and $C$ is a constant. We describe our ACO algorithm to solve CPP-LC in Algorithm \ref{algo:ACO}. In practice, we can utilize parallelism / multi-threading to speedup by running the sampling procedures (lines 7 to 12) on multiple threads. 

We initialize the priori $\eta$ from the state $x = (i, j, d)$ as follows:
\begin{equation}
\eta_{xy} = \sqrt{(W + q_e) D_{j, i'} + (W + q_e / 2) d_e},
\label{eq:init-eta-1}
\end{equation}
where state $y = (i', j', d')$ corresponds to the edge $e = (i', j')$, and $D$ is a matrix containing the lengths of all-pair shortest paths. In the case $x = s^*$, instead of Eq.~(\ref{eq:init-eta-1}), we have:
\begin{equation}
\eta_{s^*y} = \sqrt{(W + Q) D_{1, i'} + (W + Q - q_e / 2) d_e}.
\label{eq:init-eta-2}
\end{equation}
Algorithm \ref{algo:sampling} describes the sampling procedure of Ant Colony Optimization (ACO) for CPP-LC.

\begin{algorithm}
    \textbf{Input:} $\tau_{xy}$, $\eta_{xy}$ and the starting state $s^*$. \\
    Initialize the set of available edges $\bar{E} \leftarrow E$. \\
    $s \leftarrow s^*$ \\
    $\sigma \leftarrow \emptyset$ \\
    \For{$i = 1 \rightarrow |E|$}{
        Construct the set of feasible states $N(s) = \{t = (i, j, d) | (i, j) \in \bar{E}\}$. \\
        Compute the transition probability:
        $$p(s'|s) = \frac{\tau_{ss'} \cdot \eta_{ss'}}{\sum_{t \in N(s)} \tau_{st} \cdot \eta_{st}}.$$ \\
        Sample $s'$ from $p(s'|s)$. Assume $s' = (i, j, d)$. \\
        $s \leftarrow s'$ \\
        $\sigma \leftarrow \sigma \cup (i, j, d)$ \\
        $\bar{E} \leftarrow \bar{E} \setminus (i, j)$
    }
    \textbf{Output:} Return $\sigma$.
    \caption{Sampling procedure of ACO for CPP-LC}
    \label{algo:sampling}
\end{algorithm}

\section{Greedy Heuristic Construction} \label{sec:greedy}

In this section, we introduce the Greedy Heuristic Construction (GHC) in Algorithm \ref{algo:greedy} originally proposed by \cite{332b87a03ada41f3b41ad5e1c0541165}. We denote a sequence of edges (without pre-determined directions) as $\sigma = ((i_1, j_1), (i_2, j_2), \ldots, (i_m, j_m))$, and its optimal cost computed by the linear-time dynamic programming (see the previous section) as $z(\sigma)$.

\begin{algorithm}
    \textbf{Input:} We are given a graph $G = (V, E)$ in which each edge $e$ is associated with demand $q_e$ and length $d_e$, and $W$ as the vehicle weight. \\
    Sort the set of edges $E$ in decreasing order of demand multiplied by distance, i.e. $d_e \times q_e$. \\
    $\sigma^* \leftarrow \emptyset$ \\
    \For{$i = 1 \rightarrow |E|$}{
        $z_{\text{min}} \leftarrow \infty$ \\
        \For{$j = 1 \rightarrow i$}{
            Create $\sigma'$ by inserting the $i$-th edge in the list in the $j$-th position of $\sigma^*$. \\
            \If{$z(\sigma') < z_{\text{min}}$}{
                $z_{\text{min}} \leftarrow z(\sigma')$ \\
                $\sigma'' \leftarrow \sigma'$
            }
        }
        $\sigma^* \leftarrow \sigma''$
    }
    \textbf{Output:} Return $\sigma^*$.
    \caption{Greedy heuristic construction \cite{332b87a03ada41f3b41ad5e1c0541165}}
    \label{algo:greedy}
\end{algorithm}

\section{Iterated Local Search} \label{sec:ils}

We introduce Iterated Local Search (ILS) metaheuristic in Algorithm \ref{algo:ils}. The key idea of ILS is to randomly perturb a solution and then apply the local search including 1-OPT, 2-OPT and 2-EXCHANGE operators to improve it. The algorithm is repeated with $k_{\text{max}}$ iterations. In practice, we implement a parallel version of ILS in which in each iteration, all local-search operators run simultaneously (i.e. each operator runs on a different thread). This multi-threading technique improves the computational time.

\begin{algorithm}
    \textbf{Input:} We are given a graph $G = (V, E)$ in which each edge $e$ is associated with demand $q_e$ and length $d_e$, $W$ as the vehicle weight, and $k_{\text{max}}$ as the number of iterations. \\
    Call the Greedy constructive heuristic (Algorithm \ref{algo:greedy}) to initialize $\sigma^*$. \\
    $\sigma' \leftarrow \sigma^*$ \\
    \For{$k = 1 \rightarrow k_{\text{max}}$}{
        Perform $0.2 \times |E|$ random exchanges of edges within the sequence $\sigma'$. \\
        Apply local search on $\sigma'$ by selecting the best move among the operators 1-OPT, 2-OPT and 2-EXCHANGE. \\
        \eIf{$z(\sigma') < z(\sigma^*)$}{
            $\sigma^* \leftarrow \sigma'$
        }{
            $\sigma' \leftarrow \sigma^*$
        }
    }
    \textbf{Output:} Return $\sigma^*$.
    \caption{Iterated Local Search (ILS) metaheuristic \cite{332b87a03ada41f3b41ad5e1c0541165}}
    \label{algo:ils}
\end{algorithm}

\section{Variable Neighborhood Search} \label{sec:vns}

We present Variable Neighborhood Search (VNS) in Algorithm \ref{algo:vns}. Different from ILS, we use a specific order of operators: 2-EXCHANGE, 1-OPT and finally 2-OPT. This order is increasing in terms of computational cost. We stop the local search immediately after an operator can improve the current best solution. In practice, VNS is generally faster than ILS.

\begin{algorithm}
    \textbf{Input:} We are given a graph $G = (V, E)$ in which each edge $e$ is associated with demand $q_e$ and length $d_e$, $W$ as the vehicle weight, and $k_{\text{max}}$ as the number of iterations. \\
    Call the Greedy constructive heuristic (Algorithm \ref{algo:greedy}) to initialize $\sigma^*$. \\
    $\sigma' \leftarrow \sigma^*$ \\
    \For{$k = 1 \rightarrow k_{\text{max}}$}{
        Perform $0.2 \times |E|$ random exchanges of edges within the sequence $\sigma'$. \\
        Apply local search on $\sigma'$ by using the operators 2-EXCHANGE, 1-OPT, and 2-OPT in the given order. We stop and apply the first improving operator in every step of the local search. \\
        \eIf{$z(\sigma') < z(\sigma^*)$}{
            $\sigma^* \leftarrow \sigma'$
        }{
            $\sigma' \leftarrow \sigma^*$
        }
    }
    \textbf{Output:} Return $\sigma^*$.
    \caption{Variable Neighborhood Search (VNS) metaheuristic \cite{332b87a03ada41f3b41ad5e1c0541165}}
    \label{algo:vns}
\end{algorithm}
\section{Attention model details}
\label{sec:appendix_attention_model_details}

\paragraph{Attention mechanism}
\label{sec:attention_mechanism}
The attention mechanism by \cite{vaswani2017attention} is interpreted as a weighted message passing algorithm between nodes in a graph as follows. The weight of the message \emph{value} that a node receives from a neighbor depends on the \emph{compatibility} of its \emph{query} with the \emph{key} of the neighbor. Formally, we define dimensions $d_{\text{k}}$ and $d_{\text{v}}$ and compute the key $\mathbf{k}_i \in \mathbb{R}^{d_{\text{k}}}$, value $\mathbf{v}_i \in \mathbb{R}^{d_{\text{v}}}$ and query $\mathbf{q}_i \in \mathbb{R}^{d_{\text{k}}}$ for each node by projecting the embedding $\mathbf{h}_i$:
\begin{equation}
\label{eq:enc_qkv}
	\mathbf{q}_i = W^Q \mathbf{h}_i, \quad \mathbf{k}_i = W^K \mathbf{h}_i, \quad \mathbf{v}_i = W^V \mathbf{h}_i.
\end{equation}
Here parameters $W^Q$ and $W^K$ are $(d_{\text{k}} \times d_{\text{h}})$ matrices and $W^V$ has size $(d_{\text{v}} \times d_{\text{h}})$.
From the queries and keys, we compute the compatibility $u_{ij} \in \mathbb{R}$ of the query $\mathbf{q}_i$ of node $i$ with the key $\mathbf{k}_j$ of node $j$ as the dot-product:
\begin{equation}
\label{eq:compatibility}
	u_{ij} = \begin{cases}
		\frac{\mathbf{q}_i^T \mathbf{k}_j}{\sqrt{d_{\text{k}}}} & \text{if $i$ adjacent to $j$} \\
        -\infty & \text{otherwise.}
    \end{cases}
\end{equation}
In a general graph, defining the compatibility of non-adjacent nodes as $-\infty$ prevents message passing between these nodes. From the compatibilities $u_{ij}$, we compute the \emph{attention weights} $a_{ij} \in [0, 1]$ using a softmax:
\begin{equation}
	\label{eq:attention_weights}
    a_{ij} = \frac{e^{u_{ij}}}{\sum_{j'}{e^{u_{ij'}}}}.
\end{equation}
Finally, the vector $\mathbf{h}_i'$ that is received by node $i$ is the convex combination of messages $\mathbf{v}_j$:
\begin{equation}
	\label{eq:readout}
    \mathbf{h}_i' = \sum_{j} a_{ij} \mathbf{v}_j.
\end{equation}

\paragraph{Multi-head attention}
According to \cite{vaswani2017attention}, it is beneficial to have multiple attention heads. This allows nodes to receive different types of messages from different neighbors. Especially, we compute the value in \eqref{eq:readout} $M = 8$ times with different parameters, using $d_{\text{k}} = d_{\text{v}} = \frac{d_{\text{h}}}{M} = 16$. We denote the result vectors by $\mathbf{h}_{im}'$ for $m \in \{1, \ldots, M\}$. These are projected back to a single $d_{\text{h}}$-dimensional vector using $(d_{\text{h}} \times d_{\text{v}})$ parameter matrices $W_m^O$. The final multi-head attention value for node $i$ is a function of $\mathbf{h}_1, \ldots, \mathbf{h}_n$ through $\mathbf{h}_{im}'$:
\begin{equation}
\label{eq:MHA}
	\text{MHA}_i(\mathbf{h}_1, \ldots, \mathbf{h}_n) = \sum\limits_{m=1}^{M} W_m^O \mathbf{h}_{im}'.
\end{equation}

\paragraph{Feed-forward sublayer}
The feed-forward sublayer computes node-wise projections using a hidden (sub)sublayer with dimension $d_{\text{ff}} = 512$ and a ReLu activation:
\begin{equation}
\label{eq:ff_layer}
	\text{FF}(\hat{\mathbf{h}}_i) = W^{\text{ff},1} \cdot \text{ReLu}(W^{\text{ff},0} \hat{\mathbf{h}}_i + \bm{b}^{\text{ff},0}) + \bm{b}^{\text{ff},1}.
\end{equation}

\paragraph{Batch normalization}
We use batch normalization with learnable $d_{\text{h}}$-dimensional affine parameters $\bm{w}^{\text{bn}}$ and $\bm{b}^{\text{bn}}$:
\begin{equation}
\label{eq:bn_layer}
	\text{BN}(\mathbf{h}_i) = \bm{w}^{\text{bn}} \odot \overline{\text{BN}}(\mathbf{h}_i) + \bm{b}^{\text{bn}}.
\end{equation}
Here $\odot$ denotes the element-wise product and $\overline{\text{BN}}$ refers to batch normalization without affine transformation.

\end{document}